
\documentclass[journal]{IEEEtran}
%
% If IEEEtran.cls has not been installed into the LaTeX system files,
% manually specify the path to it like:
% \documentclass[journal]{../sty/IEEEtran}

% Some very useful LaTeX packages include:
% (uncomment the ones you want to load)

% *** MISC UTILITY PACKAGES ***
%
%\usepackage{ifpdf}
% Heiko Oberdiek's ifpdf.sty is very useful if you need conditional
% compilation based on whether the output is pdf or dvi.
% usage:
% \ifpdf
%   % pdf code
% \else
%   % dvi code
% \fi
% The latest version of ifpdf.sty can be obtained from:
% http://www.ctan.org/pkg/ifpdf
% Also, note that IEEEtran.cls V1.7 and later provides a builtin
% \ifCLASSINFOpdf conditional that works the same way.
% When switching from latex to pdflatex and vice-versa, the compiler may
% have to be run twice to clear warning/error messages.

% *** CITATION PACKAGES ***
%
\usepackage{cite}
% cite.sty was written by Donald Arseneau
% V1.6 and later of IEEEtran pre-defines the format of the cite.sty package
% \cite{} output to follow that of the IEEE. Loading the cite package will
% result in citation numbers being automatically sorted and properly
% "compressed/ranged". e.g., [1], [9], [2], [7], [5], [6] without using
% cite.sty will become [1], [2], [5]--[7], [9] using cite.sty. cite.sty's
% \cite will automatically add leading space, if needed. Use cite.sty's
% noadjust option (cite.sty V3.8 and later) if you want to turn this off
% such as if a citation ever needs to be enclosed in parenthesis.
% cite.sty is already installed on most LaTeX systems. Be sure and use
% version 5.0 (2009-03-20) and later if using hyperref.sty.
% The latest version can be obtained at:
% http://www.ctan.org/pkg/cite
% The documentation is contained in the cite.sty file itself.

% \usepackage[switch]{lineno}
\usepackage{bm}
\usepackage{color}
\usepackage[pdftex]{graphicx}
\usepackage{siunitx}
% *** GRAPHICS RELATED PACKAGES ***
%
\ifCLASSINFOpdf
  % \usepackage[pdftex]{graphicx}
  % declare the path(s) where your graphic files are
  % \graphicspath{{../pdf/}{../jpeg/}}
  % and their extensions so you won't have to specify these with
  % every instance of \includegraphics
  % \DeclareGraphicsExtensions{.pdf,.jpeg,.png}
\else
  % or other class option (dvipsone, dvipdf, if not using dvips). graphicx
  % will default to the driver specified in the system graphics.cfg if no
  % driver is specified.
  % \usepackage[dvips]{graphicx}
  % declare the path(s) where your graphic files are
  % \graphicspath{{../eps/}}
  % and their extensions so you won't have to specify these with
  % every instance of \includegraphics
  % \DeclareGraphicsExtensions{.eps}
\fi
% graphicx was written by David Carlisle and Sebastian Rahtz. It is
% required if you want graphics, photos, etc. graphicx.sty is already
% installed on most LaTeX systems. The latest version and documentation
% can be obtained at:
% http://www.ctan.org/pkg/graphicx
% Another good source of documentation is "Using Imported Graphics in
% LaTeX2e" by Keith Reckdahl which can be found at:
% http://www.ctan.org/pkg/epslatex
%
% latex, and pdflatex in dvi mode, support graphics in encapsulated
% postscript (.eps) format. pdflatex in pdf mode supports graphics
% in .pdf, .jpeg, .png and .mps (metapost) formats. Users should ensure
% that all non-photo figures use a vector format (.eps, .pdf, .mps) and
% not a bitmapped formats (.jpeg, .png). The IEEE frowns on bitmapped formats
% which can result in "jaggedy"/blurry rendering of lines and letters as
% well as large increases in file sizes.
%
% You can find documentation about the pdfTeX application at:
% http://www.tug.org/applications/pdftex

% *** MATH PACKAGES ***
%
%放在导言区%
\usepackage{amsmath}
\usepackage{amsthm}
\usepackage{setspace}
\usepackage{multirow}
\usepackage{amssymb}
\usepackage{mathrsfs}
% A popular package from the American Mathematical Society that provides
% many useful and powerful commands for dealing with mathematics.
%
% Note that the amsmath package sets \interdisplaylinepenalty to 10000
% thus preventing page breaks from occurring within multiline equations. Use:
%\interdisplaylinepenalty=2500
% after loading amsmath to restore such page breaks as IEEEtran.cls normally
% does. amsmath.sty is already installed on most LaTeX systems. The latest
% version and documentation can be obtained at:
% http://www.ctan.org/pkg/amsmath

% *** SPECIALIZED LIST PACKAGES ***
%
\usepackage{algorithmic}
\usepackage{algorithm}
\makeatletter
\newcommand{\removelatexerror}{\let\@latex@error\@gobble}
\makeatother

\hyphenation{op-tical net-works semi-conduc-tor}

\begin{document}
% \linenumbers
%
% paper title
% Titles are generally capitalized except for words such as a, an, and, as,
% at, but, by, for, in, nor, of, on, or, the, to and up, which are usually
% not capitalized unless they are the first or last word of the title.
% Linebreaks \\ can be used within to get better formatting as desired.
% Do not put math or special symbols in the title.
\title{Loss-Controlling Calibration for Predictive Models}
%
%
% author names and IEEE memberships
% note positions of commas and nonbreaking spaces ( ~ ) LaTeX will not break
% a structure at a ~ so this keeps an author's name from being broken across
% two lines.
% use \thanks{} to gain access to the first footnote area
% a separate \thanks must be used for each paragraph as LaTeX2e's \thanks
% was not built to handle multiple paragraphs
%

\author{Di Wang, Junzhi Shi, Pingping Wang, Shuo Zhuang, Hongyue Li % <-this % stops a space
  \thanks{This work was supported by the National Natural Science
Foundation of China under Grant 62106169. (Corresponding author: Hongyue Li)}
  \thanks{Di Wang is with School of Electrical and Information Engineering, Tianjin University, Tianjin 300072, China,
and also with Tianjin Key Laboratory of Brain-inspired Intelligence Technology, School of Electrical and Information Engineering, Tianjin University, Tianjin 300072, China.
(email: wangdi2015@tju.edu.cn).
 }
  \thanks{Junzhi Shi is with School of Electrical and Electronic Engineering, Shandong University of Technology, Zibo 255000, China. (email: shijz@sdut.edu.cn)}
  \thanks{Pingping Wang is with Qingdao Academy of Chinese Medical Science, Shandong University of Traditional Chinese Medicine, Qingdao, Shandong 266112, China. (email: wangpingping@sdutcm.edu.cn)}
  \thanks{Shuo Zhuang is with School of Computer Science and Information Engineering, Hefei University of Technology, Hefei 230009, China.
  	(email: shuozhuang@hfut.edu.cn).
  }
  \thanks{Hongyue Li is with School of Electrical and Information Engineering, Tianjin University, Tianjin 300072, China.
	(lihongyue@tju.edu.cn).
}% <-this % stops a space
}
% note the % following the last \IEEEmembership and also \thanks -
% these prevent an unwanted space from occurring between the last author name
% and the end of the author line. i.e., if you had this:
%
% \author{....lastname \thanks{...} \thanks{...} }
%                     ^------------^------------^----Do not want these spaces!
%
% a space would be appended to the last name and could cause every name on that
% line to be shifted left slightly. This is one of those "LaTeX things". For
% instance, "\textbf{A} \textbf{B}" will typeset as "A B" not "AB". To get
% "AB" then you have to do: "\textbf{A}\textbf{B}"
% \thanks is no different in this regard, so shield the last } of each \thanks
% that ends a line with a % and do not let a space in before the next \thanks.
% Spaces after \IEEEmembership other than the last one are OK (and needed) as
% you are supposed to have spaces between the names. For what it is worth,
% this is a minor point as most people would not even notice if the said evil
% space somehow managed to creep in.

% The paper headers
\markboth{Loss-Controlling Calibration for Predictive Models}%
{Shell \MakeLowercase{\textit{et al.}}: Bare Demo of IEEEtran.cls for IEEE Journals}
% \markboth{Preprints}%
% {Shell \MakeLowercase{\textit{et al.}}: Bare Demo of IEEEtran.cls for IEEE Journals}
% The only time the second header will appear is for the odd numbered pages
% after the title page when using the twoside option.
%
% *** Note that you probably will NOT want to include the author's ***
% *** name in the headers of peer review papers.                   ***
% You can use \ifCLASSOPTIONpeerreview for conditional compilation here if
% you desire.

% If you want to put a publisher's ID mark on the page you can do it like
% this:
%\IEEEpubid{0000--0000/00\$00.00~\copyright~2015 IEEE}
% Remember, if you use this you must call \IEEEpubidadjcol in the second
% column for its text to clear the IEEEpubid mark.

% use for special paper notices
%\IEEEspecialpapernotice{(Invited Paper)}

%\IEEEaftertitletext{\vspace{-2.5\baselineskip}}

% make the title area
\maketitle
% \pagestyle{empty}  % no page number for the second and the later pages
% \thispagestyle{empty} % no page number for the first page
% As a general rule, do not put math, special symbols or citations
% in the abstract or keywords.

\begin{abstract}
We propose a learning framework for calibrating predictive models to make loss-controlling prediction for exchangeable data, which extends our recently proposed conformal loss-controlling prediction for more general cases. By comparison, the predictors built by the proposed loss-controlling approach are not limited to set predictors, and the loss function can be any measurable function without the monotone assumption. To control the loss values in an efficient way, we introduce transformations preserving exchangeability to prove finite-sample controlling guarantee when the test label is obtained, and then develop an approximation approach to construct predictors. The transformations can be built on any predefined function, which include using optimization algorithms for parameter searching. This approach is a natural extension of conformal loss-controlling prediction, since it can be reduced to the latter when the set predictors have the nesting property and the loss functions are monotone. Our proposed method is applied to selective regression and high-impact weather forecasting problems, which demonstrates its effectiveness for general loss-controlling prediction.
\end{abstract}

% Note that keywords are not normally used for peerreview papers.
\begin{IEEEkeywords}
Loss-controlling calibration, Predictive models, Selective regression, Weather forecasting.
\end{IEEEkeywords}

% For peer review papers, you can put extra information on the cover
% page as needed:
% \ifCLASSOPTIONpeerreview
% \begin{center} \bfseries EDICS Category: 3-BBND \end{center}
% \fi
%
% For peerreview papers, this IEEEtran command inserts a page break and
% creates the second title. It will be ignored for other modes.
% \IEEEpeerreviewmaketitle

\section{Introduction}

Predictive models built on modern machine learning techniques have been deployed for many areas due to their expressive power. However, many of the algorithms can not provide reliable information about the difference or distance between the prediction and the true label for a specific test object, which is essential for confidence prediction \cite{confidencedistributions} and is important for high-risk applications \cite{balasubramanian2014conformal}. If the prediction is a set of possible labels and the difference is the miscoverage loss for set predictors, the learning framework of conformal prediction (CP) can tackle this issue with its coverage guarantee under the assumption of exchangeability of data samples \cite{vovk2005algorithmic} \cite{angelopoulos2021gentle} \cite{fontana2023conformal}. Furthermore, our recently proposed conformal loss-controlling prediction (CLCP) \cite{wang2023conformalloss} extends CP from the miscoverage loss to the loss satisfying monotone conditions, which ensures that the prediction loss is not greater than a preset level for high confidence. These two existing frameworks are both limited to set predictors and non-general losses, leading to this work considering general forms of predictors and losses for loss-controlling prediction, which is a form of prediction with confidence beyond confidence sets with coverage guarantee.

CLCP is inspired by risk-controlling prediction sets \cite{bates2021distribution} and conformal risk control \cite{angelopoulos2022conformal}, and the purpose of CLCP is to build a set predictor $C_{\lambda^*}$ such that
\begin{equation}
	P \Bigg (L \Big (Y_{n+1}, C_{\lambda^*}(X_{n+1}) \Big ) \leq \alpha \Bigg ) \geq 1 - \delta,\label{formula_1}
\end{equation}
where $\alpha$ and $\delta$ are preset parameters for loss level and significance level respectively, and $L$ is a monotone loss function as in \cite{bates2021distribution}. $C_{\lambda}$ is the prediction set with the nesting property for the parameter $\lambda \in \Lambda$, where $\Lambda$ is the discrete set of possible values of $\lambda$. $C_{\lambda}$ is usually built on some underlying predictive model learned on training data. The optimal $\lambda^*$ is obtained based on $n$ calibration data $\{(X_i, Y_i)\}_{i=1}^n$, and $(X_{n+1}, Y_{n+1})$ is the test feature-response pair. The randomness of the probability inequation above is from both $\{(X_i, Y_i)\}_{i=1}^n$ and $(X_{n+1}, Y_{n+1})$. The approach of CLCP needs to calculate the $1 - \delta$ quantiles of losses on calibration data for all $\lambda \in \Lambda$ and search for $\lambda^*$ based on the monotone conditions of loss functions. Although CLCP extends CP to more general cases, the forms of the set predictors and the loss functions used in CLCP are still limited.

To overcome this issue, one way is to use the learn then test process \cite{angelopoulos2021learn} to fuse multiple probability inequations like formula (\ref{formula_1}) to maintain the controlling guarantee. However, this process can not be effectively applied to our loss-controlling approach. One example is to use the Bonferroni correction to obtain the family-wise loss-controlling guarantee, where one needs to calculate the $1 - \delta/|\Lambda|$ quantile of losses for each possible $\lambda \in \Lambda$, resulting in meaningless calculation if $|\Lambda|$ is large and the number of calibration data is not. For example, if $|\Lambda| = 1000$ and $\delta = 0.1$, we need to calculate the $0.9999$ quantile of losses for each possible $\lambda$, which makes sense only if the number of calibration data is more than 10000.

Therefore, to improve data efficiency, the loss-controlling calibration (LCC) approach proposed in this paper employs predefined searching functions and the transformations preserving exchangeability to avoid the multiple hypothesis testing process, whose approach is a natural extension of CLCP. Concretely, we aim to calibrate a predictive model $f$ to obtain the calibrated predictor $F_{\hat{\lambda}}$ such that
\begin{equation}
	P \Bigg (L \Big (Y_{n+1}, F_{\hat{\lambda}}(X_{n+1}) \Big ) \leq \alpha \Bigg ) \geq 1 - \delta, \label{formula_2}
\end{equation}
where $F_{\lambda}$ can be a point, set or any other form of predictor built on $f$ with the parameter $\lambda$. $L$ is a measurable loss function without the need of monotone conditions. The optimal $\hat{\lambda}$ is calculated by some predefined function and all $n+1$ data $\{(X_i, Y_i)\}_{i = 1}^{n+1}$, i.e., the controlling guarantee of formula (\ref{formula_2}) is only for the ideal case where one has the test label. However, we can approximately obtain $\lambda^* \approx \hat{\lambda}$ using $\{(X_i, Y_i)\}_{i=1}^n$ in practice and the controlling guarantee can still be hold empirically in our experiments. In other words, the LCC proposed in this paper sacrifices the theoretical guarantee to efficient calibration, and the approximation is sound for large $n$ in theory and in our empirical studies. In the experiments, we apply LCC to selective regression with single or multiple targets to calibrate point predictors to control one or multiple losses, and also apply LCC to high-impact weather forecasting applications to control the non-monotone loss related to false discovery. All of the experimental results confirm the effectiveness of our proposed LCC approach.

In summary, three contributions are made in this paper:
\begin{itemize}
\item A learning framework named loss-controlling calibration is proposed for calibrating predictive models to make general loss-controlling prediction. The approach is a natural extension of CLCP and is easy to implement.
\item By employing transformations preserving exchangeability, the distribution-free and finite-sample controlling guarantee is proved mathematically with the exchangeability assumption in the ideal condition where the test label is obtained, and a reasonable approximation approach is proposed for practice.
\item The proposed LCC is applied to selective regression and weather forecasting problems, which empirically demonstrates its effectiveness for loss-controlling prediction in general cases.
\end{itemize}

The remaining parts of this paper are organized as follows. Section II reviews inductive conformal prediction and conformal loss-controlling prediction and Section III proposes the loss-controlling calibration approach with its theoretical analysis. Section IV applies the proposed approach to selective regression and high-impact weather forecasting problems to empirically verify the loss-controlling guarantee. Finally, the conclusions of this paper are drawn in Section V.

\section{Inductive Conformal Prediction and Conformal Loss-Controlling Prediction}

This section reviews inductive conformal prediction and recently proposed conformal loss-controlling prediction. Throughout this paper, let $\{(X_i, Y_i)\}_{i = 1}^{n+1}$ be $n+1$ data drawn exchangeably from $P_{XY}$ on $\mathcal{X} \times \mathcal{Y}$. $(X_{n+1}, Y_{n+1})$ is the test object-response pair and the first $n$ samples $\{(X_i, Y_i)\}_{i = 1}^{n}$ are calibration data. The lower-case letter  $(x_i, y_i)$ represents the realization of $(X_i, Y_i)$.

\subsection{Inductive Conformal Prediction}

Inductive conformal prediction (ICP) \cite{papadopoulos2008inductive} is a variant of conformal prediction tackling the computational issue of the original conformal prediction approach. ICP starts with any measurable function $A: \mathcal{X} \times \mathcal{Y} \rightarrow \mathcal{R}$ called nonconformity measure, and calculates $n$ nonconformity scores as
\begin{equation}\nonumber
	A_i = A(X_i, Y_i),
\end{equation}
for $i = 1, \cdots, n$.
Denote $Q^{(n)}_{1-\delta}$ as the $1 - \delta$ quantile of $\{A_i\}_{i=1}^n \cup \{\infty\}$. With the assumption of exchangeability of data samples, for any preset $\delta \in (0,1)$, ICP makes promise that
\begin{equation}\nonumber
	P \Bigg ( A(X_{n+1}, Y_{n+1}) \leq Q^{(n)}_{1-\delta} \Bigg ) \geq 1 - \delta.
\end{equation}
Thus, ICP outputs the following set prediction
\begin{equation}\nonumber
	C^{(n)}_{1-\delta}(X_{n+1}) = \{ y : A(X_{n+1}, y) \leq Q^{(n)}_{1-\delta}\},
\end{equation}
which leads to
\begin{equation}\nonumber
	P \Bigg (Y_{n+1} \in C^{(n)}_{1-\delta}(X_{n+1}) \Bigg ) \geq 1 - \delta.
\end{equation}
The nonconformity measure $A$ is usually designed based on a point prediction model $f$ trained on  training samples drawn from $P_{XY}$ and here is an example for a classification problem with $K$ classes. In this situation, based on training samples, one can train a classifier $f: \mathcal{X} \rightarrow [0,1]^K$, whose $k$th output $f_k$ is the estimated probability of the $k$th class, and the corresponding nonconformity measure can be defined as 
\begin{equation}\nonumber
	A(x, y) = 1 -f_k(x),
\end{equation}
which leads to the following prediction set for an input object $x$,
\begin{equation}\nonumber
	C^{(n)}_{1-\delta}(x) = \{ k : f_k(x) \geq 1 - Q^{(n)}_{1-\delta}\}.
\end{equation}

\subsection{Conformal Loss-Controlling Prediction}

Different from ICP, the purpose of CLCP is to build predictors with loss-controlling guarantee as formula (\ref{formula_1}), whose approach is inspired by conformal risk control \cite{angelopoulos2022conformal}. CLCP starts with a set-valued function $C_{\lambda}: \mathcal{X} \rightarrow \mathcal{Y}'$ with a parameter $\lambda \in \Lambda$, where $\Lambda$ is a discrete set of possible real values of $\lambda$ such as from $0$ to $1$ with step size of $0.01$. $\mathcal{Y}'$ denotes some space of sets. For example, $\mathcal{Y}'$ can be the power set of $\mathcal{Y}$ for 
single-label classification and can be equal to $\mathcal{Y}$ for binary image segmentation. This set-valued function $C_{\lambda}$ needs to satisfy the following nesting property introduced in \cite{bates2021distribution}:
\begin{equation}
	\lambda_1 < \lambda_2 \ \Longrightarrow\  C_{\lambda_1}(x) \subseteq C_{\lambda_2}(x). \label{formula_3}
\end{equation}
Here we give an example of constructing the prediction set $C_{\lambda}$ for classification problem with $K$ classes. With the same meanings of $f$ and $f_k$ mentioned in Section II-A, CLCP can construct the prediction set as
\begin{equation}\nonumber
	C_{\lambda}(x) = \{ k : f_k(x) \geq 1 - \lambda\},
\end{equation}
which satisfies the nesting property of formula (\ref{formula_3}).

In addition, for each realization of response $y$, the loss function $L: \mathcal{Y} \times \mathcal{Y}' \rightarrow \mathcal{R}$ considered in CLCP should respect the following monotone property or nesting property:
\begin{equation}
	S_1 \subseteq S_2 \subseteq \mathcal{Y}' \ \Longrightarrow \ L(y, S_2) \leq L(y, S_1) \leq B, \label{formula_4}
\end{equation}
where $B$ is the upper bound.

After determining $C_{\lambda}$ and $L$, for preset $\alpha$ and $\delta$, CLCP \textbf{first} calculates $L_i$ as 
\begin{equation}\nonumber
	L_i(\lambda) = L(Y_i, C_{\lambda}(X_i))
\end{equation}
for $i = 1, \cdots, n$, and \textbf{then} searches for $\lambda^*$ such that
\begin{equation}\nonumber
	\lambda^* = \min \Bigg \{ \lambda \in \Lambda: Q^{(n)}_{1-\delta}(\lambda) \leq \alpha    \Bigg \},
\end{equation}
where $Q^{(n)}_{1-\delta}(\lambda)$ is the $1 - \delta$ quantile of $\{L_i(\lambda)\}_{i=1}^n \cup \{ B \}$. The finally obtained set predictor $C_{\lambda^*}$ satisfies the controlling guarantee of formula (\ref{formula_1}), which is proved in theory for distribution-free and finite-sample conditions, and CLCP can be seen as an extension of CP for specific forms of $C_{\lambda}$ and $L$ \cite{wang2023conformalloss}.

\section{Loss-Controlling Calibration and Its Theoretical Analysis}

This section introduces the extension of CLCP to general cases with the proposed loss-controlling calibration, analyze it theoretically in the ideal case and promotes it to control multiple losses jointly.

\subsection{Loss-Controlling Calibration}
CLCP needs nesting properties for $C_{\lambda}$ and $L$, which limits its applicability. Therefore, we propose loss-controlling calibration for general predictors and loss functions. We denote $F_{\lambda}$ as a predictor built on a predictive model $f$ learned from training data, where $\lambda$ is a parameter taking values from a discrete set $\Lambda$. For LCC, we emphasize that $F_{\lambda}: \mathcal{X} \rightarrow \mathcal{Y}'$ can be any kind of predictor, i.e., $\mathcal{Y}'$ does not have to be the set of label sets. Besides, $\Lambda$ can be any discrete set such as the set of multi-dimensional vectors as in \cite{angelopoulos2021learn}. Also, the loss function $L: \mathcal{Y} \times \mathcal{Y}' \rightarrow \mathcal{R}$ considered for LCC can be any measurable function bounded above by $B$, i.e., for each object-response pair $(x, y)$,
\begin{equation}
	L(y, F_{\lambda}(x)) \leq B. \label{formula_5}
\end{equation}

Given these general conditions, one way of constructing loss-controlling guarantee is to use multiple hypothesis testing process developed in learn then test \cite{angelopoulos2021learn}. However, this may lead to calculating the $1 - \delta/|\Lambda|$ quantiles of losses on calibration data, which may be meaningless for our loss-controlling approach when the number of calibration data is small or moderate. Thus, we propose to use a predefined function $s$ independent of $\{(X_i, Y_i)\}_{i = 1}^{n+1}$ to do the trick, where $s$ stands for searching since it can be defined as an optimization algorithm for parameter searching. The approach of LCC is very similar to CLCP and we first introduce it for comparison, leaving the analysis of it to the next section. 

After determining $F_{\lambda}$ and $L$, for preset $\alpha$ and $\delta$, LCC \textbf{first} calculates $L_i$ on calibration data as
\begin{equation}
	L_i(\lambda) = L(Y_i, F_{\lambda}(X_i)), \label{formula_6}
\end{equation}
and \textbf{then} search for $\lambda^*$ such that 
\begin{equation}
	\lambda^* = s \Bigg (\Bigg \{ \lambda \in \Lambda: Q^{(n)}_{1-\delta}(\lambda) \leq \alpha    \Bigg \} \Bigg ), \label{formula_7}
\end{equation}
where $s: P(\Lambda) \rightarrow \Lambda$ is the predefined searching function defined on the power set of $\Lambda$, whose output is an element of its input, and $Q^{(n)}_{1-\delta}(\lambda)$ is the $1-\delta$ quantile of $\{L_i(\lambda)\}_{i=1}^n \cup \{ B \}$. The final predictor built by LCC is $F_{\lambda^*}$, which is very similar to $F_{\hat{\lambda}}$ satisfying the loss-controlling guarantee as formula (\ref{formula_2}). The relation between $F_{\lambda^*}$ and $F_{\hat{\lambda}}$ will be introduced in Section II-B. 

It can be seen that LCC is exactly CLCP if $\Lambda \subset \mathcal{R}$, $F_{\lambda}$ is a set predictor with nesting property as formula (\ref{formula_3}), $L$ is monotone as formula (\ref{formula_4}) and $s$ is the min function. Therefore, for LCC we also use the same notations of $L_i$ and $Q^{(n)}_{1-\delta}(\lambda)$ as CLCP to represent similar concepts. Here we summarized LCC in Algorithm 1.

\begin{algorithm}
	\caption{Loss-Controlling Calibration}
	\label{alg:Framwork}
	\begin{algorithmic}[1]
		\REQUIRE ~~\\
		Calibration dataset $\{(x_i, y_i)\}_{i=1}^n$, test input object $x_{n+1}$, the predictor $F_{\lambda}$, the loss function $L$ satisfying formula (\ref{formula_5}), the predefined searching function $s$, preset $\alpha \in \mathcal{R}$ and $\delta \in (0, 1)$.\\
		\ENSURE ~~\\
		Calibrated prediction for $y_{n+1}$.
		\STATE
		Based on formula (\ref{formula_6}), calculate $\{L_i(\lambda)\}_{i=1}^n$.
		\STATE
		Search for $\lambda^*$ satisfying formula (\ref{formula_7}).
		\RETURN
		$F_{\lambda^*}(x_{n+1})$
	\end{algorithmic}
\end{algorithm}

\subsection{Theoretical Analysis of Loss-Controlling Calibration}

This section provides the theoretical insights of LCC.
Let $Q^{(n+1)}_{1-\delta}(\lambda)$ be the $1 - \delta$ quantile of $\{L_i(\lambda)\}_{i = 1}^{n+1}$.
Define $\hat{\lambda}$ as
\begin{equation}
	\hat{\lambda} = s \Bigg (\Bigg \{ \lambda \in \Lambda: Q^{(n+1)}_{1-\delta}(\lambda) \leq \alpha    \Bigg \} \Bigg ), \label{formula_8}
\end{equation}
which is very similar to $\lambda^*$ especially for large $n$, as $Q^{(n+1)}_{1-\delta}(\lambda)$ and $Q^{(n)}_{1-\delta}(\lambda)$ are nearly the same in that case.

Here we introduce the definition of $(\alpha, \delta)$-loss-controlling predictors and then prove loss-controlling guarantee with $\hat{\lambda}$ based on the theorem about transformations preserving exchangeability developed in \cite{dean1990linear} and introduced in \cite{kuchibhotla2020exchangeability} as Theorem 3.

\newtheorem{definition}{Definition}
\begin{definition}
Given a loss function $L: \mathcal{Y} \times \mathcal{Y}' \rightarrow \mathcal{R}$ and a random sample $(X, Y) \in \mathcal{X} \times \mathcal{Y}$, a random function $F$ whose realization is in the space of functions $\mathcal{X} \rightarrow \mathcal{Y}'$ is a $(\alpha, \delta)$-loss-controlling predictor if it satisfies that
\[
P \Bigg (L \Big (Y, F(X) \Big ) \leq \alpha \Bigg ) \geq 1 - \delta,
\]
where the randomness is both from $F$ and $(X, Y)$.
\end{definition}

Next we prove in Theorem 1 that $F_{\hat{\lambda}}$ is a $(\alpha, \delta)$-loss-controlling predictor.

\newtheorem{theorem}{Theorem}
\begin{theorem}
Suppose $\{(X_i, Y_i)\}_{i = 1}^{n+1}$ are $n+1$ data drawn exchangeably from $P_{XY}$ on $\mathcal{X} \times \mathcal{Y}$, $F_{\lambda}: \mathcal{X} \rightarrow \mathcal{Y}'$ is a function with the parameter $\lambda$ taking values from a discrete set $ \Lambda$ , $L: \mathcal{Y} \times \mathcal{Y}' \rightarrow \mathcal{R}$ is a loss function satisfying formula (\ref{formula_5}) and $L_i(\lambda)$ is defined as formula (\ref{formula_6}). Denote $s: P(\Lambda) \rightarrow \Lambda$ as any searching function defined on the power set of $\Lambda$ whose output is the element of its input. For any preset $\alpha \in \mathcal{R}$, if $L$ also satisfies the following conditions,
\begin{equation}
\min_{\lambda}\max_i L_i(\lambda) < \alpha, \label{formula_9}
\end{equation}
then for any $\delta \in (\frac{1}{n+1},1)$, we have
\begin{equation}\nonumber
P \Bigg (L \Big (Y_{n+1}, F_{\hat{\lambda}}(X_{n+1}) \Big ) \leq \alpha \Bigg ) \geq 1 - \delta,
\end{equation}
where $\hat{\lambda}$ is defined as formula (\ref{formula_8}).
\end{theorem}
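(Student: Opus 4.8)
The plan is to reduce the statement to an exchangeability argument on the realized losses evaluated at the data-dependent parameter $\hat{\lambda}$. First I would introduce $\tilde{L}_i := L_i(\hat{\lambda}) = L(Y_i, F_{\hat{\lambda}}(X_i))$ for $i = 1, \ldots, n+1$, so that the event of interest is exactly $\{\tilde{L}_{n+1} \le \alpha\}$. The difficulty relative to the standard inductive conformal argument is that $\hat{\lambda}$ is computed from all $n+1$ samples, including the test pair, so the usual reasoning with a threshold that is independent of the test point does not apply directly.

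The key observation I would exploit is that $\hat{\lambda}$ is a permutation-invariant (symmetric) functional of the data: for each fixed $\lambda$ the quantile $Q^{(n+1)}_{1-\delta}(\lambda)$ depends only on the multiset $\{L_i(\lambda)\}_{i=1}^{n+1}$, hence the feasible set $\{\lambda : Q^{(n+1)}_{1-\delta}(\lambda) \le \alpha\}$ is invariant under permutations of the samples, and applying $s$ to it yields a symmetric $\hat{\lambda}$. Consequently the map $(X_i, Y_i) \mapsto \tilde{L}_i$ is permutation-equivariant, and I would invoke the theorem on transformations preserving exchangeability (Theorem 3 of \cite{kuchibhotla2020exchangeability}, after \cite{dean1990linear}) to conclude that $(\tilde{L}_1, \ldots, \tilde{L}_{n+1})$ is exchangeable. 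This is the heart of the proof and the step I expect to require the most care, since one must check that $\hat{\lambda}$ is a genuine measurable symmetric functional that does not single out the index $n+1$ in any privileged way.

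Next I would use the hypotheses (\ref{formula_9}) to pin down $\hat{\lambda}$. The condition $\min_{\lambda}\max_i L_i(\lambda) < \alpha$ produces a $\lambda_0$ with all $n+1$ losses strictly below $\alpha$, so $Q^{(n+1)}_{1-\delta}(\lambda_0) < \alpha$ and the feasible set is non-empty; since $s$ returns an element of its input, $\hat{\lambda}$ lies in the feasible set and therefore $Q^{(n+1)}_{1-\delta}(\hat{\lambda}) \le \alpha$ holds on this event. The companion condition $\max_{\lambda}\min_i L_i(\lambda) > \alpha$ guarantees the feasible set is a proper subset of $\Lambda$, so the calibration is non-degenerate rather than trivially satisfied at every $\lambda$. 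By the empirical-quantile convention used throughout the paper (the $1-\delta$ quantile of $m$ values being its $\lceil (1-\delta)m\rceil$-th order statistic), $Q^{(n+1)}_{1-\delta}(\hat{\lambda}) \le \alpha$ forces
\[
\sum_{i=1}^{n+1} \mathbb{1}[\tilde{L}_i \le \alpha] \ \ge\ \lceil (1-\delta)(n+1)\rceil \ \ge\ (1-\delta)(n+1).
\]

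Finally I would convert this deterministic counting bound into the desired probability via exchangeability. Since $(\tilde{L}_1, \ldots, \tilde{L}_{n+1})$ is exchangeable, $P(\tilde{L}_{n+1} \le \alpha) = P(\tilde{L}_i \le \alpha)$ for every $i$, and averaging over $i$ gives
\[
P\Big(\tilde{L}_{n+1} \le \alpha\Big) \;=\; \mathbb{E}\left[\frac{1}{n+1}\sum_{i=1}^{n+1}\mathbb{1}[\tilde{L}_i \le \alpha]\right] \;\ge\; 1 - \delta,
\]
which is exactly the claimed loss-controlling guarantee since $\tilde{L}_{n+1} = L(Y_{n+1}, F_{\hat{\lambda}}(X_{n+1}))$. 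The only routine gaps to fill are fixing the precise quantile convention so that $Q^{(n+1)}_{1-\delta}(\hat{\lambda})\le\alpha$ certifies the stated count, and verifying measurability of $\hat{\lambda}$; the substantive content is entirely the exchangeability-preservation step of the second paragraph.
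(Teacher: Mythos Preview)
Your proposal is correct and follows essentially the same approach as the paper: both establish that $\hat{\lambda}$ is a symmetric functional of the data, invoke the transformation-preserving-exchangeability result (Theorem~3 of \cite{kuchibhotla2020exchangeability}) to conclude that $\{L_i(\hat{\lambda})\}_{i=1}^{n+1}$ is exchangeable, and then combine this with $Q^{(n+1)}_{1-\delta}(\hat{\lambda})\le\alpha$ to obtain the guarantee. The only cosmetic difference is that the paper cites the quantile inequality for exchangeable sequences (Lemma~1 of \cite{tibshirani2019conformal}) directly, whereas you unpack it as a counting bound plus the averaging identity $P(\tilde L_{n+1}\le\alpha)=\mathbb{E}\bigl[\tfrac{1}{n+1}\sum_i \mathbb{1}[\tilde L_i\le\alpha]\bigr]$; these are equivalent.
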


\begin{proof}

Formula (\ref{formula_9}) implies that $\hat{\lambda}$ is well defined. As $\hat{\lambda}$ is defined on the whole dataset $\{(X_i, Y_i)\}_{i = 1}^{n+1}$ with the function $s$, one can treat $\{L_i(\hat{\lambda})\}_{i = 1}^{n+1}$ as a transformation $\tau$ applied to $\{(X_i, Y_i)\}_{i = 1}^{n+1}$, i.e.,
\begin{equation}\nonumber
	\{L_i(\hat{\lambda})\}_{i = 1}^{n+1} = \tau \Big (\{(X_i, Y_i)\}_{i = 1}^{n+1} \Big )
\end{equation}
Besides, based on Theorem 3 in \cite{kuchibhotla2020exchangeability}, this transformation preserves exchangeability, since for each permutation $\pi_1$, there exists a permutation $\pi_2 = \pi_1$, such that 
\begin{equation}\nonumber
\pi_1 \Big (\{L_i(\hat{\lambda})\}_{i = 1}^{n+1} \Big ) = \tau \circ \pi_2 \Big (\{(X_i, Y_i)\}_{i = 1}^{n+1} \Big ),
\end{equation}
for all possible $\{(X_i, Y_i)\}_{i = 1}^{n+1}$.
It follows that 
\begin{equation}
	P \Bigg (L_{n+1}(\hat{\lambda}) \leq Q_{1-\delta}^{(n+1)}(\hat{\lambda})  \Bigg ) \geq 1 - \delta, \label{formula_10}
\end{equation}
as $Q_{1-\delta}^{(n+1)}(\hat{\lambda})$ is just the corresponding $1 - \delta$ quantile of the exchangeable variables $\{L_i(\hat{\lambda})\}_{i = 1}^{n+1}$.
(See Lemma 1 in \cite{tibshirani2019conformal}).

By definition of the function $s$, we have $Q_{1-\delta}^{(n+1)}(\hat{\lambda}) \leq \alpha$. This combining formula (\ref{formula_10}) leads to
\begin{equation}\nonumber
	P \Bigg (L_{n+1}(\hat{\lambda}) \leq \alpha  \Bigg ) \geq 1 - \delta,
\end{equation}
which completes the proof.
\end{proof}

Theorem 1 shows the loss-controlling guarantee for the ideal case where $(X_{n+1}, Y_{n+1})$ is available, whose approach can be approximated using Algorithm 1 in practice. The conditions that $\delta \in (\frac{1}{n+1},1)$ and formula (\ref{formula_9}) holds imply that $\lambda^*$ is well defined, which makes us able to obtain $\lambda^*$ based on the searching function $s$. Also, by definition, one can conclude that
\begin{equation}\nonumber
	\Bigg \{ \lambda \in \Lambda: Q^{(n)}_{1-\delta}(\lambda) \leq \alpha    \Bigg \} \subseteq \Bigg \{ \lambda \in \Lambda: Q^{(n+1)}_{1-\delta}(\lambda) \leq \alpha    \Bigg \},
\end{equation}
which indicates that searching $\lambda$ in the left set above is reasonable, especially for large $n$. The proof in Theorem 1 only needs $s$ to be a predefined function independent of $\{(X_i, Y_i)\}_{i = 1}^{n+1}$. Thus, one can define $s$ as an optimization algorithm based on another hold-out dataset for parameter searching.

\subsection{Controlling Multiple Losses}

Due to the general forms of the calibrated predictors and the loss functions, one can consider using LCC to control multiple losses jointly. Suppose the $j$th loss on $i$th calibration sample with $\lambda$ is $L_{j,i}(\lambda)$, the loss level for $j$th loss is $\alpha_j$ and the number of losses is $m$. One simple method is to search for $\lambda^*$ such that,
\begin{equation}
\lambda^* = s\Bigg \{ \lambda \in \Lambda: Q^{(n)}_{j,1-\delta/m}(\lambda) \leq \alpha_j, j = 1, \cdots, m    \Bigg \},  \label{formula_11}
\end{equation}
where $Q^{(n)}_{j,1-\delta/m}(\lambda)$ is the $1-\delta/m$ quantile of $\{L_{j,i}(\lambda)\}_{i=1}^n \cup \{ B \}$. Therefore, to control multiple losses jointly, one may have to calculate the $1-\delta/m$ quantiles, which only makes sense when $m$ is small.

To show that searching with formula (\ref{formula_11}) is reasonable, the following Corollary 1 is introduced to control multiple losses jointly when the test label is obtained, and concludes that one can search for $\hat{\lambda}$ such that
\begin{equation}
\hat{\lambda} =	s\Bigg \{ \lambda \in \Lambda: Q^{(n+1)}_{j,1-\delta/m}(\lambda) \leq \alpha_j, j = 1, \cdots, m    \Bigg \},  \label{formula_12}
\end{equation}
where $Q^{(n+1)}_{j,1-\delta/m}(\lambda)$ is the $1 - \delta/m$ quantile of $\{L_{j,i}(\lambda)\}_{i=1}^{n+1}$. 

\newtheorem{corollary}{Corollary}
\begin{corollary}
Assume that $\lambda$ is an $m$-dimensional vector and for each $j = 1, \cdots, m$, $L_{j,i}(\lambda)$ only depends on its $j$th dimension $\lambda_j$, i.e.,
\begin{equation}
	L_{j,i}(\lambda) \equiv L_{j,i}(\lambda_j). \label{formula_13}
\end{equation}
For any preset $\alpha_j \in \mathcal{R}$, if $L_{j,i}$ also satisfies the following conditions,
\begin{equation}
	\min_{\lambda}\max_i L_{j,i}(\lambda) < \alpha_j, \label{formula_14}
\end{equation}
then for any $\delta \in (\frac{1}{n+1}, 1)$, we have
\begin{equation}\nonumber
P \Bigg (L_{j,n+1}(\hat{\lambda}) \leq \alpha_j, j = 1,\cdots,m \Bigg ) \geq 1 - \delta,
\end{equation}
where $\hat{\lambda}$ is defined as formula (\ref{formula_12}).
\end{corollary}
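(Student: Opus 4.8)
The plan is to reduce the joint statement to $m$ separate single-loss guarantees of the type already established in Theorem 1, and then glue them together with a union bound. Because condition (\ref{formula_14}) is imposed coordinate-wise and, by the decoupling assumption (\ref{formula_13}), each constraint $Q^{(n+1)}_{j,1-\delta/m}(\lambda)\le\alpha_j$ depends only on $\lambda_j$, the feasible set in (\ref{formula_12}) is a Cartesian product of the $m$ per-coordinate feasible sets. First I would use the left inequality in (\ref{formula_14}) to show each of these factor sets is non-empty (a value of $\lambda_j$ with $\max_i L_{j,i}(\lambda_j)<\alpha_j$ forces its $1-\delta/m$ quantile below $\alpha_j$), so the product is non-empty and the $\hat{\lambda}$ of (\ref{formula_12}) is well defined, exactly as (\ref{formula_9}) guaranteed well-definedness in Theorem 1.

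Next, I would fix an index $j$ and repeat the transformation argument of Theorem 1 for the single sequence $\{L_{j,i}(\hat{\lambda})\}_{i=1}^{n+1}$. The only thing that matters there is that $\hat{\lambda}$ is a symmetric (permutation-invariant) function of $\{(X_i,Y_i)\}_{i=1}^{n+1}$, which holds because it is built from quantiles and from the predefined map $s$; hence the map sending the dataset to $\{L_{j,i}(\hat{\lambda})\}_{i=1}^{n+1}$ preserves exchangeability (taking $\pi_2=\pi_1$ as before). By the quantile lemma used for (\ref{formula_10}), this yields the marginal bound
\[
P\Bigl(L_{j,n+1}(\hat{\lambda})\le Q^{(n+1)}_{j,1-\delta/m}(\hat{\lambda})\Bigr)\ge 1-\frac{\delta}{m}.
\]
Since $\hat{\lambda}$ lies in the feasible set of (\ref{formula_12}), its defining inequality gives $Q^{(n+1)}_{j,1-\delta/m}(\hat{\lambda})\le\alpha_j$, so that $P\bigl(L_{j,n+1}(\hat{\lambda})>\alpha_j\bigr)\le\delta/m$ for every $j$.

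Finally, I would combine these $m$ tail bounds with Boole's inequality,
\[
P\Bigl(\exists\, j:\,L_{j,n+1}(\hat{\lambda})>\alpha_j\Bigr)\le\sum_{j=1}^{m}P\bigl(L_{j,n+1}(\hat{\lambda})>\alpha_j\bigr)\le m\cdot\frac{\delta}{m}=\delta,
\]
whose complement is the claimed joint guarantee; this is precisely why the per-loss level is deflated to $1-\delta/m$ in (\ref{formula_12}). I expect the only genuinely delicate point to be the bookkeeping around the coupling of coordinates: the exchangeability argument must be run for each $j$ with one and the same data-dependent vector $\hat{\lambda}$, so I would be careful to note that it is the symmetry of $\hat{\lambda}$ itself (not of each coordinate in isolation) that licenses applying Theorem 1's transformation argument per coordinate, while assumption (\ref{formula_13}) is what guarantees the jointly feasible $\hat{\lambda}$ exists in the first place. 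The union bound itself is routine and requires no independence, so no further structure on the joint law of the losses is needed.
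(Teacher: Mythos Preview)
Your proposal is correct and follows essentially the same approach as the paper: use conditions (\ref{formula_13}) and (\ref{formula_14}) to ensure $\hat{\lambda}$ is well defined, apply the Theorem~1 argument per coordinate to obtain $P\bigl(L_{j,n+1}(\hat{\lambda})>\alpha_j\bigr)\le\delta/m$, and finish with a union bound. You spell out more carefully than the paper does that one must rerun the exchangeability-preserving transformation argument (rather than invoke Theorem~1 as a black box, since $\hat{\lambda}$ here is defined by (\ref{formula_12}) rather than (\ref{formula_8})), and you make the Cartesian-product structure of the feasible set explicit, but these are elaborations of the same proof rather than a different route.
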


\begin{proof}
The conditions of formula (\ref{formula_13}) and (\ref{formula_14}) guarantee that $\hat{\lambda}$ is well defined, and with Theorem 1, we have
\begin{equation}\nonumber
	P \Bigg (L_{j,n+1}(\hat{\lambda}) > \alpha_j \Bigg ) \leq \delta/m, 
\end{equation}
for each $j = 1,\cdots,m$, which leads to the conclusion of Corollary 1.
\end{proof}

The conditions that $\delta \in (\frac{1}{n+1}, 1)$ and formula (\ref{formula_13}) and (\ref{formula_14}) hold are assumed to make sure that both $\hat{\lambda}$ and $\lambda^*$ exist, which can be replaced or relaxed in practice. However, the extra assumptions indicate the difficulty of jointly controlling multiple losses, since one needs to take into account many aspects to avoid from searching optimal $\lambda$ in an empty set.

\section{Experiments}

In this section, we first apply LCC to selective regression on $20$ public datasets for single-target regression, which tests the ability of LCC to calibrate point predictors. Then we conduct experiments on $6$ public datasets for selective regression with multiple targets to verify the approach of jointly controlling multiple losses using formula (\ref{formula_11}). Finally, we introduce LCC to high-impact weather forecasting applications to control the non-monotone loss related to false discovery. We use Python \cite{citepython} to conduct the experiments. The statistical learning methods in Section IV-A and IV-B were coded with Scikit-learn \cite{scikit-learn} and the deep neural nets in Section IV-C were coded with Pytorch \cite{NEURIPS2019_9015}. 

\begin{table}[h]
	\centering
	\caption{Datasets for Single-Target Regression}
	\scalebox{0.95}{
		\begin{tabular}{lccc}
			\hline
			Dataset & Examples & Dimensionality &  Source \\
			\hline
			abalone & 4177 & 8 & UCI \\
			bank8fh & 8192& 8 & Delve \\
			bank8fm & 8192 & 8 & Delve \\
			bank8nh & 8192 & 8 & Delve \\
			bank8nm & 8192 & 8 & Delve \\
			boston & 506 & 13 & UCI \\
			cooling & 768 & 8 & UCI \\
			heating & 768 & 8 & UCI \\
			istanbul & 536 & 7 & UCI \\
			kin8fh & 8192 & 8 & Delve \\
			kin8fm & 8192 & 8 & Delve \\
			kin8nh & 8192 & 8 & Delve \\
			kin8nm & 8192 & 8 & Delve \\
			laser & 993 & 4 &  KEEL\\
			puma8fh & 8192 & 8 &  Delve\\
			puma8fm & 8192& 8&  Delve\\
			puma8nh & 8192 & 8 &  Delve\\
			puma8nm & 8192 & 8 &  Delve\\
			stock & 950 & 9 &  KEEL\\
			treasury & 1048 & 15 &  Delve\\
			\hline
	\end{tabular}}
\end{table}

\subsection{LCC for Selective Regression with Single Target}

\begin{figure*}[h]
	\centering
	\includegraphics[width = 1 \hsize]{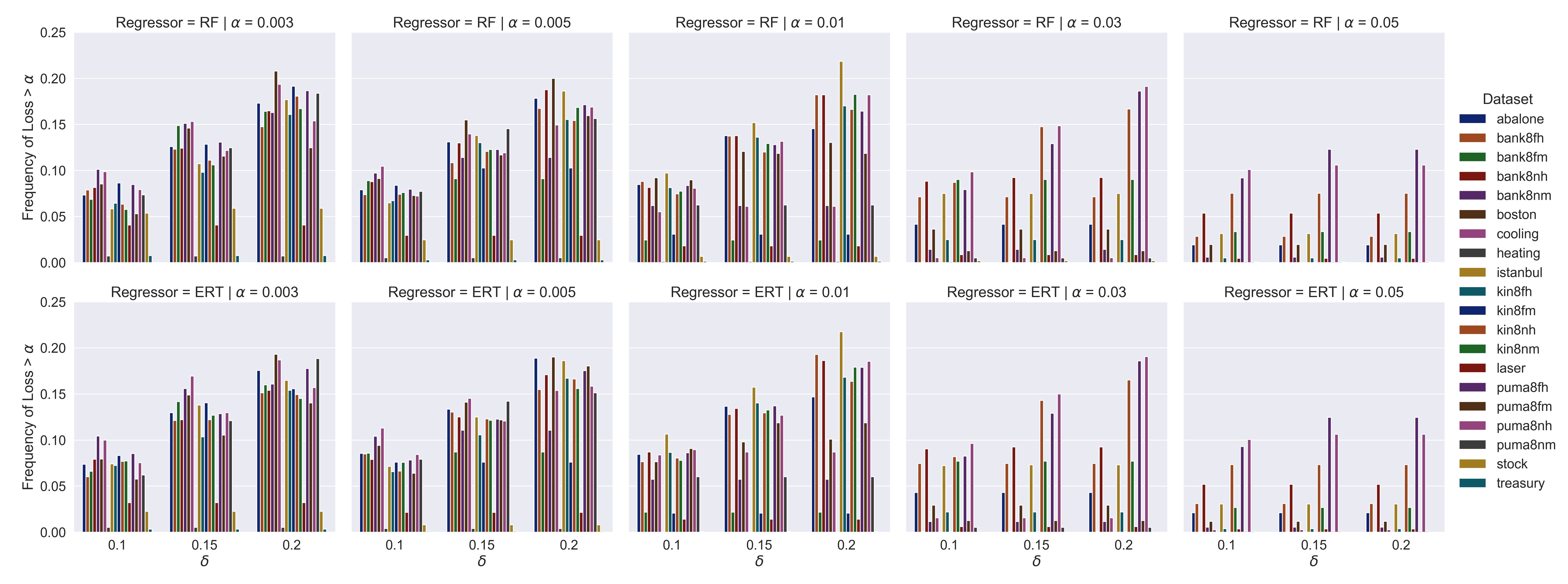}
	\caption{Frequencies of the prediction losses being greater than $\alpha$ vs. $\delta = 0.1, 0.15, 0.2$ on test data for selective single-target regression. The first row and the second row correspond to RF and ERT respectively. Different columns represent different $\alpha$. The bars are all near or below $\delta$, indicating the controlling guarantee of LCC empirically.}
\end{figure*}

\begin{figure*}[h]
	\centering
	\includegraphics[width = 1 \hsize]{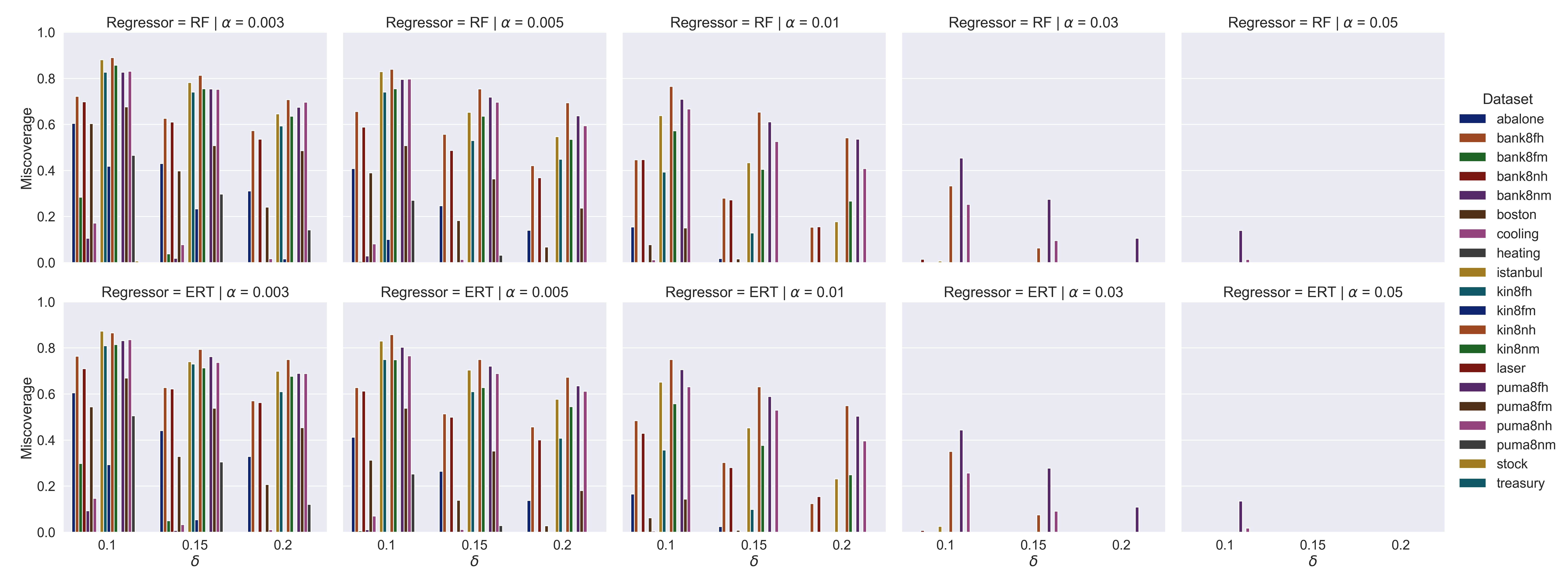}
	\caption{Miscoverage of selective predictions vs. $\delta = 0.1, 0.15, 0.2$ on test data for selective single-target regression. The first row and the second row correspond to RF and ERT respectively. Different columns represent different $\alpha$. Tuning $\alpha$ and $\delta$ can change Miscoverage, which indicates the trade-off between loss level and informational efficiency.}
\end{figure*}

\begin{figure*}[h]
	\centering
	\includegraphics[width = 1 \hsize]{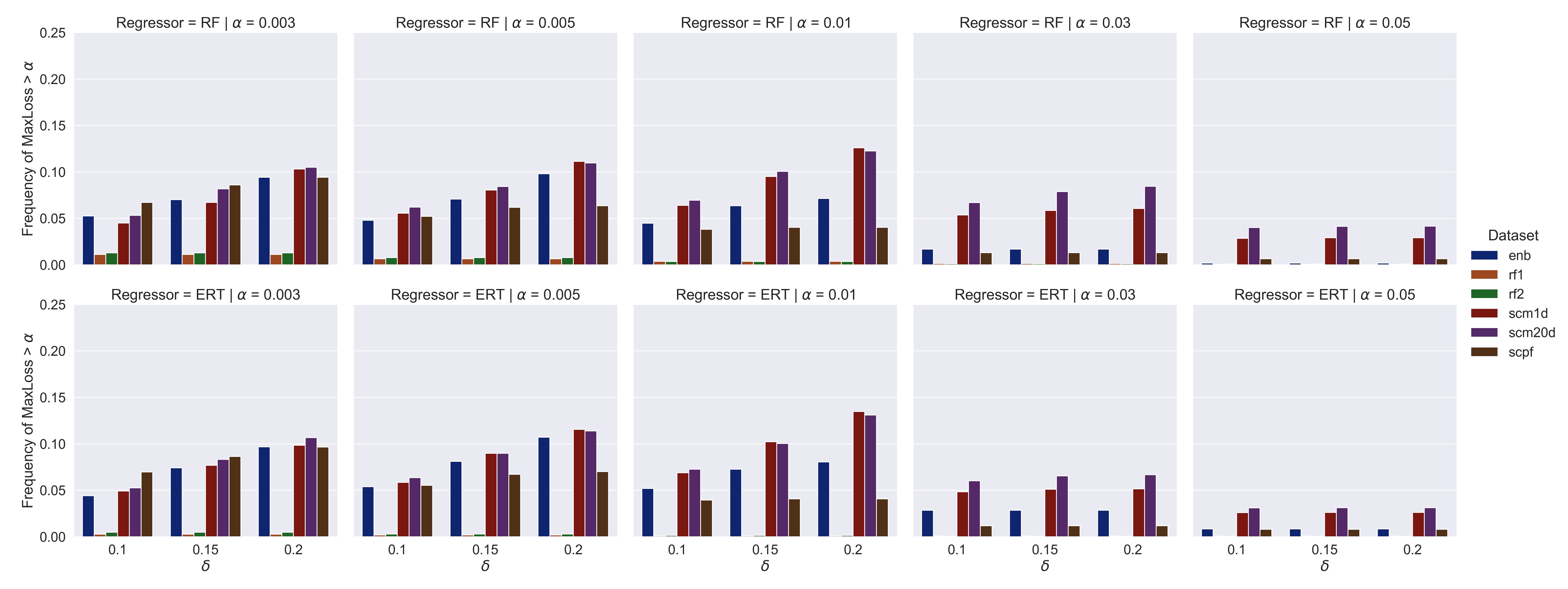}
	\caption{Frequencies of the maximum prediction losses being greater than $\alpha$ vs. $\delta = 0.1, 0.15, 0.2$ on test data for selective multi-target regression. The first row and the second row correspond to RF and ERT respectively. Different columns represent different $\alpha$. The bars are all near or below $\delta$, indicating the controlling guarantee based on formula (\ref{formula_11}) empirically.}
\end{figure*}

\begin{figure*}[h]
	\centering
	\includegraphics[width = 1 \hsize]{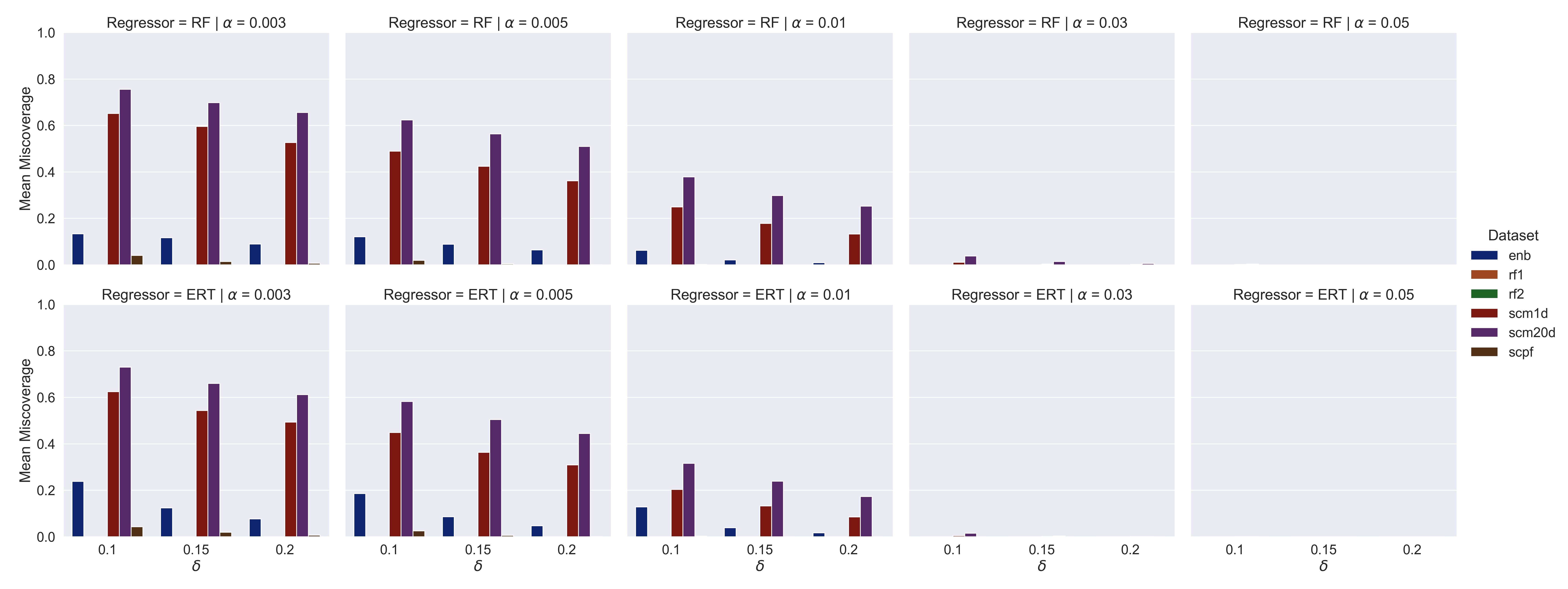}
	\caption{Mean Miscoverage of selective predictions vs. $\delta = 0.1, 0.15, 0.2$ on test data for selective multi-target regression. The first row and the second row correspond to RF and ERT respectively. Different columns represent different $\alpha$. Tuning $\alpha$ and $\delta$ can change Mean Miscoverage, which indicates the trade-off between loss level and informational efficiency.}
\end{figure*}

\begin{figure*}[h]
	\centering
	\includegraphics[width = 1 \hsize]{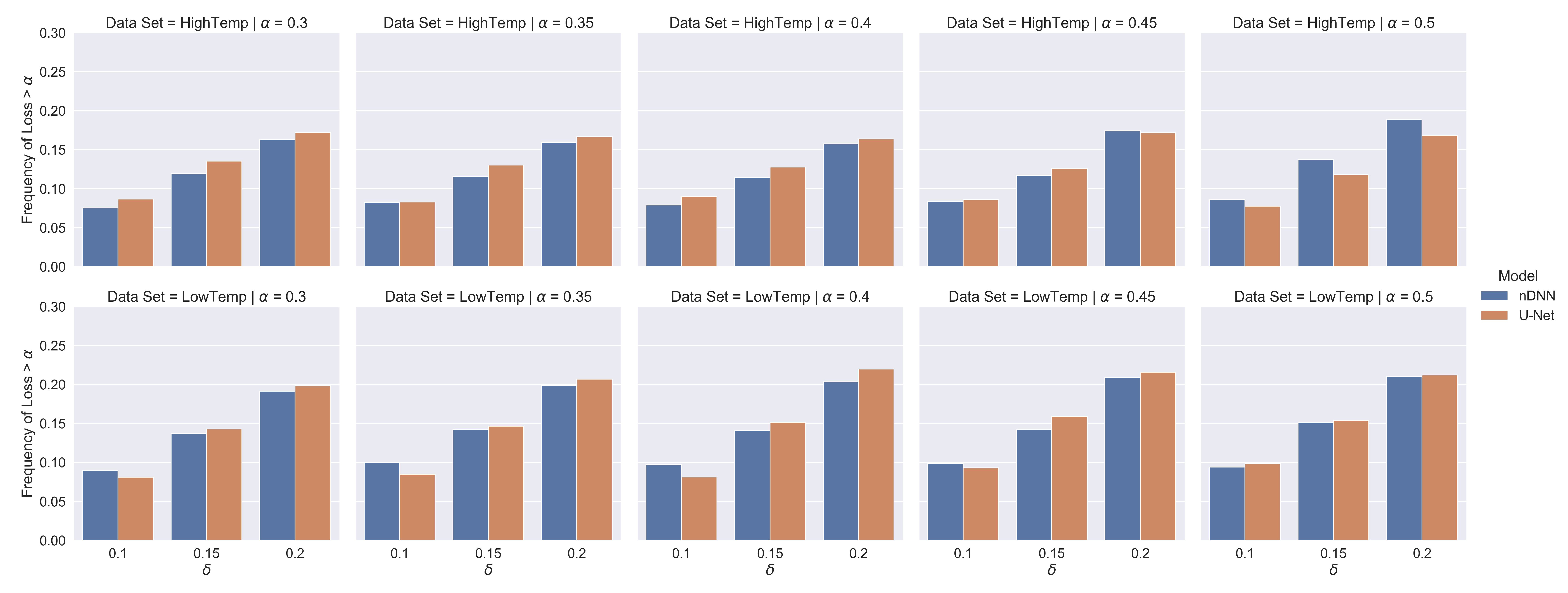}
	\caption{Frequencies of the prediction losses being greater than $\alpha$ for different $\delta$ and $\alpha$ on test data of HighTemp and LowTemp datasets. All bars being near or below the preset $\delta$ confirms the controlling guarantee of LCC empirically.}
\end{figure*}

\begin{figure*}[h]
	\centering
	\includegraphics[width = 1 \hsize]{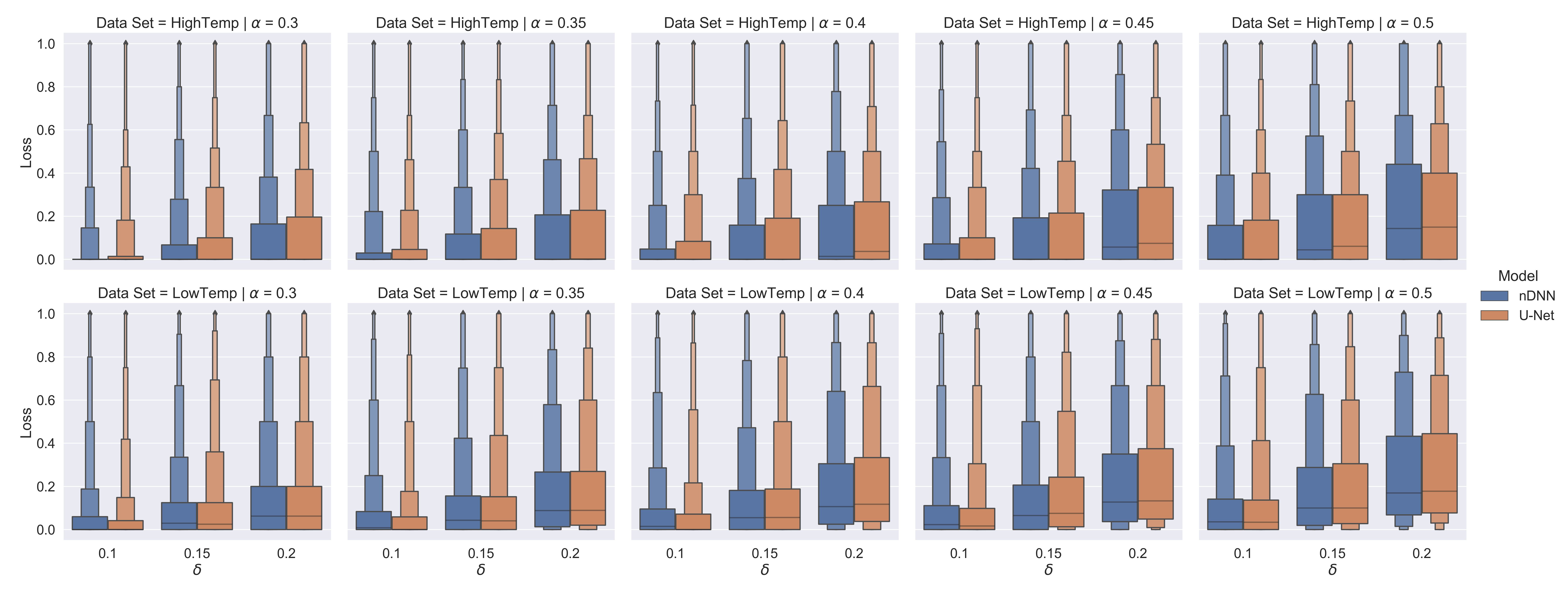}
	\caption{Distributions of the prediction losses for different $\delta$ and $\alpha$ on test data of HighTemp and LowTemp datasets. The losses are controlled by $\alpha$ and $\delta$ properly to achieve the empirical validity in Fig. 1.}
\end{figure*}

\begin{figure*}[h]
	\centering
	\includegraphics[width = 1 \hsize]{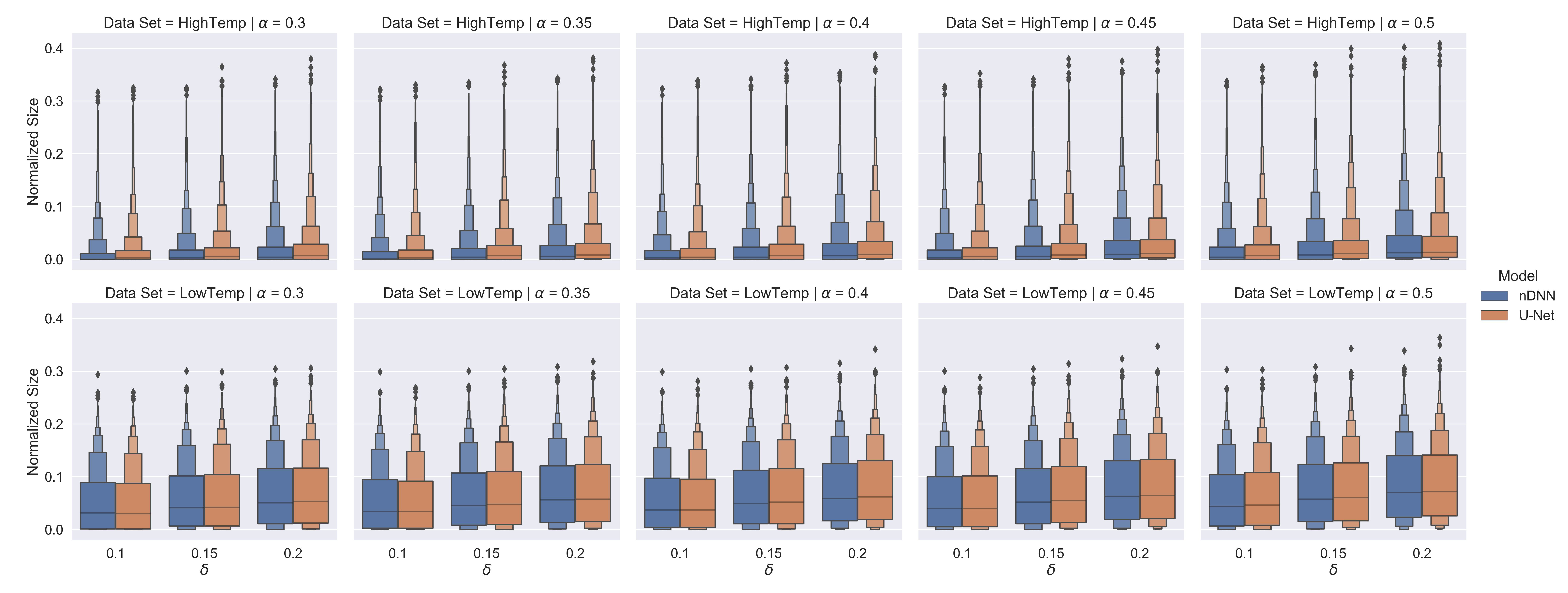}
	\caption{Distributions of normalized sizes for different $\delta$ and $\alpha$ on test data of HighTemp and LowTemp datasets. The predictions have reasonable sizes for both U-Net and nDNN for high-impact weather forecasting.}
\end{figure*}

Selective regression is the selective prediction task for regression problems, which we introduce here referring to \cite{Feng2022selective}. Selective regression model is a model with the ability to abstain from making prediction when lacking confidence. The model can be comprised of a prediction function $f$ and a hard selection function $g$. For a given input object $x$, a selective model predicts the label as $f(x)$ if $g(x) = 1$, and abstains from making prediction if $g(x) = 0$. The hard selection function $g$ can be built on a soft selection function $\bar{g}$ and a threshold $\lambda$, such that $g(x) = 1$ if $\bar{g}(x) \leq \lambda$ and  $g(x) = 0$ if $\bar{g}(x) > \lambda$. An example is to use a function related to estimated conditional variance as $\bar{g}(x)$.

The objective of a selective regression model can be formalized as minimizing the risk $E((Y - f(X))^2g(X))$ in the condition that $E(g(X))$ is not too low, where $E(g(X))$ is a performance indicator called coverage. Therefore, we conduct experiments in this section to build selective regression models to control the prediction loss $(y - f(x))^2g(x)$, whose informational efficiency is measured by $1 - E(g(X))$, which is estimated using test data and is recorded as miscoverage in Fig. 2. Lower miscoverage means better performance for selective models.

The empirical studies were conducted on $20$ public datasets for single-target regression, which are from Delve \cite{rasmussen1996delve}, KEEL \cite{alcala2010keel} and UCI \cite{asuncion2007uci} repositories and the information is summarized in TABLE I.

We employ bagging trees to build selective regression models, and the corresponding prediction function $f(x)$ and soft selection function $\bar{g}(x)$ are constructed using the mean and the standard variance of the predictions made by tree members. The selective regression model for calibration in this paper is formalized as 
\begin{equation}\nonumber
F_{\lambda}(x)=\begin{cases}
	f(x)&\bar{g}(x) \leq \lambda,\\
    \emptyset&\bar{g}(x) > \lambda.
\end{cases}
\end{equation}
All features and labels were normalized to $[0,1]$ with min-max normalization. For each dataset, $20\%$ of the data were used for testing and $80\%$ and $20\%$ of the remaining data were used for training and calibration. Random forests (RF) \cite{breiman2001random} and extremely randomized trees (ERT) \cite{geurts2006extremely} were used to build selective regression models with the default meta-parameters set by Scikit-learn. The data split process for each dataset was randomly conducted $10$ times and the average results were recorded in Fig. 1 and Fig. 2, where we set $\alpha \in \{0.003, 0.005, 0.01, 0.03, 0.05\}$ and $\delta \in \{0.1, 0.15, 0.2\}$. The $\lambda^*$ is searched from $0$ to $1$ with step size being $0.01$, and the search function $s$ is the max function, since we prefer low miscoverage for selective regression models.

The bar plots in Fig. 1 demonstrate that the frequency of the losses being above $\alpha$ is near or below preset $\delta$, which verifies the loss-controlling guarantee of LCC for point predictors empirically, since the frequency is an estimation of the following probability 
\begin{equation}\nonumber
	P \Bigg (L \Big (Y_{n+1}, F_{\lambda^*}(X_{n+1}) \Big ) > \alpha \Bigg ),
\end{equation}
which we expect to be below $\delta$.
In Fig. 2, we can observe that tuning $\alpha$ and $\delta$ can change miscoverage of selective regression models, which is reasonable since high levels of loss and significance relax the constrains on the prediction losses. This indicates that one should set $\alpha$ and $\delta$ properly for specific applications, making the trade-off between prediction loss and miscoverage.

\begin{table}[h]
	\centering
	\caption{Datasets for Multi-Target Regression}
	\scalebox{1}{
		\begin{tabular}{lccc}
			\hline
			Dataset & Examples & Dimensionality &  Targets\\
			\hline
			enb & 768 & 8 & 2 \\
			rf1 & 9125 & 64 & 8 \\
			rf2 & 9125 & 576 & 8 \\
			scm1d & 9803 & 280 & 16 \\
			scm20d & 8966 & 61 & 16 \\
			scpf & 1137 & 23 & 3 \\
			\hline
	\end{tabular}}
\end{table}

\subsection{LCC for Selective Regression with Multiple Targets}

The purpose of this section is to test the approach with formula (\ref{formula_11}) for controlling multiple losses. The datasets for multi-target regression are collected from Mulan library \cite{tsoumakas2011mulan} and the information of each dataset is listed in TABLE II. For each dataset, the same normalization and partition processes as those in Section IV-A were conducted and we trained RF and ERT for multi-target regression with Scikit-learn using default meta-parameters. For $m$-target regression, we obtain the $m$-dimensional mean and standard variance function based on the tree members, which are also denoted as $f(x)$ and $\bar{g}(x)$ respectively with $f^{(j)}(x)$ and $\bar{g}^{(j)}(x)$ representing the $j$th component. Therefore, the selective regression model with $m$-dimensional parameter $\lambda$ in this paper is an $m$-dimensional function $F_{\lambda}(x)$, whose $j$th component is defined as 
\begin{equation}\nonumber
	F^{(j)}_{\lambda}(x)=\begin{cases}
		f^{(j)}(x)&\bar{g}^{(j)}(x) \leq \lambda_j,\\
		\emptyset&\bar{g}^{(j)}(x) > \lambda_j,
	\end{cases}
\end{equation}
where $\lambda_j$ is the $j$th element of $\lambda$. This model consists of $m$ single-target regressors and the $m$ losses in this empirical study are just $m$ individual losses considered in Section IV-A, i.e., the $i$th loss is $(y^{(j)} - f^{(j)}(x))^2g^{(j)}(x)$, where $g^{(j)}(x)$ is the corresponding hard selection function of $\bar{g}^{(j)}(x)$. We search for $\lambda^*$ as formula (\ref{formula_11}) with $s$ aiming to find maximum possible $\lambda_j$ for each $j$ and combine them as $\lambda^*$, since we prefer low miscoverage for each target. 

We set $\delta \in \{0.1, 0.15, 0.2\}$ and set all $\alpha_j$ as the same $\alpha$, which is taken from $\{0.003, 0.005, 0.01, 0.03, 0.05\}$. To verify the loss-controlling guarantee for multiple losses, we use the test data to calculate the frequency of $\max_{j} L_{j,n+1}(\lambda^*)$ being above $\alpha$, since it is an estimation of the following probability 
\begin{equation}\nonumber
	P \Bigg ( \max_{j} L_{j,n+1}(\lambda^*) > \alpha \Bigg ),
\end{equation}
which we expect to be below $\delta$ if the losses are jointly controlled. The experimental results on test data are shown in Fig. 3 and Fig. 4, where we denote MaxLoss as $\max_{j} L_{j,n+1}(\lambda^*)$ and Mean Miscoverage as the mean value of $m$ miscoverages for $m$ targets, which is a way of measuring informational efficiency for selective regression with multiple targets.

The bar plots in Fig. 3 empirically confirm the controlling guarantee implied by Corollary 1 and the results in Fig. 4 also indicates that tuning $\alpha$ and $\delta$ can affect informational efficiency of the models. Since RF and ERT can build accurate prediction functions for rf1 and rf2, the frequencies of MaxLoss being above $\alpha$ can be very low and the Mean Miscoverage is zero for each preset $\alpha$ in the experiments, indicating the importance of designing accurate prediction functions for selective regression. Although the prediction functions for the other four datasets are not as accurate as those for rf1 and rf2, we can always tune $\alpha$ and $\delta$ to change Mean Miscoverage under the loss-controlling guarantee, which demonstrates the flexibility of our approach. Also, this trade-off between the loss level $\alpha$, confidence level $1 - \delta$ and Mean Miscoverage should be made based on specific applications.

\subsection{LCC for high-impact weather forecasting}
We apply LCC to high-impact weather forecasting, which is based on postprocessing of numerical weather prediction (NWP) models \cite{vannitsem2018statistical} \cite{vannitsem2021statistical} \cite{gronquist2021deep}, i.e., learning a predictor whose inputs are forecasts made by NWP models and outputs are corresponding high-impact weather. We use LCC to postprocess the ensemble forecasts issued by European Centre for Medium-Range Weather Forecasts (ECMWF) \cite{palmer2019ecmwf}. The forecasts are obtained from the THORPEX Interactive Grand Global Ensemble (TIGGE) dataset \cite{cisl_rda_ds330}. We concentrate on $2$-m maximum temperature and minimum temperature forecasts initialized at $0000$ UTC with the forecast lead times from $12$nd hour to $36$th hour. The resolution of the forecast fields is $0.5^{\circ} \times 0.5^{\circ}$ and the corresponding label fields with the same resolution are calculated using the ERA5 reanalysis data \cite{hersbach2020era5}. The area covers the main parts of North China, East China and Central China, ranging from $109^{\circ}$E to $122^{\circ}$E in longitude and from $29^{\circ}$N to $42^{\circ}$N in latitude with the grid size being $27 \times 27$. The HighTemp and LowTemp datasets introduced in \cite{wang2023conformalloss} are used for empirical studies. The inputs in HighTemp are $2$-m maximum temperature forecasting fields and the corresponding label fields are whether the observed $2$-m maximum temperature is above $\SI{35}{\degreeCelsius}$ for each grid. Similarly, the inputs in LowTemp are $2$-m minimum temperature forecasting fields and the corresponding label fields are whether the observed $2$-m minimum temperature is below $\SI{-15}{\degreeCelsius}$ for each grid. The sample sizes of HighTemp and LowTemp are $1200$ and $1233$ respectively.

The experimental setting is similar to that in Section IV-B of \cite{wang2023conformalloss}. For each dataset, all forecasts made by the NWP model were normalized to $[0,1]$ by min–max normalization. $20\%$ of the data were used for testing and $80\%$ and $20\%$ of the remaining data were used for training and calibration respectively. The normalized ensemble fields forecast by the NWP model are taken as input $x$ and the set of grids having high-impact weather is the corresponding label $y$, which can be seen as the image segmentation problem in computer vision. Thus, we employed two fully convolutional neural networks \cite{li2021survey} as our underlying algorithms. One was U-Net \cite{ronneberger2015u} and the other is the naive deep neural network (nDNN), which is the U-Net removing skip-connections. The structures of the two networks are the same as those in \cite{wang2023conformalloss}. To train the deep nets, we further partitioned the data for training to validation part ($10\%$) and proper training part ($90\%$), which were used for model selection and parameter updating respectively. Adam optimization \cite{kingma2014adam} with the learning rate being $0.0001$ and the number of epochs being $1000$ was employed for training, and the model whose binary cross entropy was the lowest on validation data was chosen as the predictive model $f$ needing calibration. The candidate calibrated predictor $F_{\lambda}$ is defined as
\begin{equation}\nonumber
	F_{\lambda}(x) = \{(p,q): f_{(p,q)}(x) \geq \lambda\},
\end{equation}
where $f_{(p,q)}(x)$ is the estimated probability for high-impact weather existing at grid $(p,q)$.
The loss function is
\begin{equation}\nonumber
	L(y, F) = 1 - \frac{|y \cap F|}{|F|},
\end{equation}
which is a non-monotone loss function related to false discovery introduced in \cite{angelopoulos2021learn}, and can be seen as one minus precision for each sample. The searching function $s$ we used is the min function, as we expect to detect more high-impact weather given the precision for each sample being controlled properly. We also tested other forms of searching functions such as the max function. However, although the controlling guarantee can be hold empirically, the constructed predictor may lose informational efficiency for applicability, implying that the forms of searching functions should be designed on a case-by-case basis. The final calibrated predictors were obtained with the proposed LCC approach and the experimental results are shown in Fig. 5, Fig. 6 and Fig. 7.

The frequencies of the prediction losses being more than $\alpha$ are shown in Fig. 5 with bar plots for $\delta = 0.1, 0.15$ and $0.2$. The columns represent the cases where $\alpha = 0.3, 0.35, 0.4, 0.45$ and $0.5$ respectively. All bars are near or below the preset $\delta$, which verifies loss-controlling guarantee empirically. The boxen plots of the losses for different $\delta$ and $\alpha$ are shown in Fig. 6, which contain more information about tails by drawing narrower boxes than box plots. It can be observed that $\alpha$ and $\delta$ result in larger losses, which should be preset based on specific applications. The informational efficiency of $F_{\lambda^*}$ is measured using normalized size of the prediction set defined as $|F_{\lambda^*}(x)| /PQ$ in which the numbers of the vertical and the horizontal grids of prediction fields are denoted by $P$ and $Q$ respectively.
The distributions of normalized sizes are shown in Fig. 7, indicating that different $\alpha$ and $\delta$ cause different normalized sizes and there should be a trade-off among loss level $\alpha$, confidence level $1-\delta$ and informational efficiency of the predictions. Finally, all of the predictions have reasonable sizes using LCC, which demonstrates its effectiveness for high-impact weather forecasting.

\section{Conclusion}

This paper proposes loss-controlling calibration, which extends conformal loss-controlling prediction to calibrating predictive models with more general forms of calibrated predictors and losses. The finite-sample and distribution-free loss-controlling guarantee is proved by introducing a searching function and the property of transformations preserving exchangeability in the ideal case. In addition, an approximation approach for practical calibration is proposed, whose main steps are the same as those of conformal loss-controlling prediction, i.e., the main difference between loss-controlling calibration and conformal loss-controlling prediction is whether the calibrated predictors and the loss functions satisfy specific conditions. The method is applied to selective regression and high-impact weather forecasting problems, and the loss-controlling guarantee is verified empirically in these cases. Further empirical studies with case-by-case design are needed to test the loss-controlling ability of the proposed calibration approach for a wider range of applications.

% \section{}
% Appendix two text goes here.

% use section* for acknowledgment
% \section*{Acknowledgment}

% The authors would like to thank the China Meteorological Administration for its assistance in processing satellite data.

% Can use something like this to put references on a page
% by themselves when using endfloat and the captionsoff option.
\ifCLASSOPTIONcaptionsoff
  \newpage
\fi

% trigger a \newpage just before the given reference
% number - used to balance the columns on the last page
% adjust value as needed - may need to be readjusted if
% the document is modified later
%\IEEEtriggeratref{8}
% The "triggered" command can be changed if desired:
%\IEEEtriggercmd{\enlargethispage{-5in}}

% references section

% can use a bibliography generated by BibTeX as a .bbl file
% BibTeX documentation can be easily obtained at:
% http://mirror.ctan.org/biblio/bibtex/contrib/doc/
% The IEEEtran BibTeX style support page is at:
% http://www.michaelshell.org/tex/ieeetran/bibtex/
\bibliographystyle{IEEEtran}
% argument is your BibTeX string definitions and bibliography database(s)
\bibliography{ref}

% Generated by IEEEtran.bst, version: 1.14 (2015/08/26)
\begin{thebibliography}{10}
\providecommand{\url}[1]{#1}
\csname url@samestyle\endcsname
\providecommand{\newblock}{\relax}
\providecommand{\bibinfo}[2]{#2}
\providecommand{\BIBentrySTDinterwordspacing}{\spaceskip=0pt\relax}
\providecommand{\BIBentryALTinterwordstretchfactor}{4}
\providecommand{\BIBentryALTinterwordspacing}{\spaceskip=\fontdimen2\font plus
\BIBentryALTinterwordstretchfactor\fontdimen3\font minus
  \fontdimen4\font\relax}
\providecommand{\BIBforeignlanguage}[2]{{%
\expandafter\ifx\csname l@#1\endcsname\relax
\typeout{** WARNING: IEEEtran.bst: No hyphenation pattern has been}%
\typeout{** loaded for the language `#1'. Using the pattern for}%
\typeout{** the default language instead.}%
\else
\language=\csname l@#1\endcsname
\fi
#2}}
\providecommand{\BIBdecl}{\relax}
\BIBdecl

\bibitem{confidencedistributions}
T.~Schweder and N.~L.~Hjort, \emph{Confidence, likelihood, probability:
  statistical inference with confidence distributions}.\hskip 1em plus 0.5em
  minus 0.4em\relax Cambridge University Press, 2016.

\bibitem{balasubramanian2014conformal}
V.~Balasubramanian, S.-S. Ho, and V.~Vovk, \emph{Conformal prediction for
  reliable machine learning: theory, adaptations and applications}.\hskip 1em
  plus 0.5em minus 0.4em\relax Newnes, 2014.

\bibitem{vovk2005algorithmic}
V.~Vovk, A.~Gammerman, and G.~Shafer, \emph{Algorithmic learning in a random
  world}.\hskip 1em plus 0.5em minus 0.4em\relax Springer Science \& Business
  Media, 2005.

\bibitem{angelopoulos2021gentle}
A.~N. Angelopoulos and S.~Bates, ``A gentle introduction to conformal
  prediction and distribution-free uncertainty quantification,'' \emph{arXiv
  preprint arXiv:2107.07511}, 2021.

\bibitem{fontana2023conformal}
M.~Fontana, G.~Zeni, and S.~Vantini, ``Conformal prediction: a unified review
  of theory and new challenges,'' \emph{Bernoulli}, vol.~29, no.~1, pp. 1--23,
  2023.

\bibitem{wang2023conformalloss}
D.~Wang, P.~Wang, Z.~Ji, X.~Yang, and H.~Li, ``Conformal loss-controlling
  prediction,'' \emph{arXiv preprint arXiv:2301.02424}, 2023.

\bibitem{bates2021distribution}
S.~Bates, A.~Angelopoulos, L.~Lei, J.~Malik, and M.~Jordan,
  ``Distribution-free, risk-controlling prediction sets,'' \emph{Journal of the
  ACM (JACM)}, vol.~68, no.~6, pp. 1--34, 2021.

\bibitem{angelopoulos2022conformal}
A.~N. Angelopoulos, S.~Bates, A.~Fisch, L.~Lei, and T.~Schuster, ``Conformal
  risk control,'' \emph{arXiv preprint arXiv:2208.02814}, 2022.

\bibitem{angelopoulos2021learn}
A.~N. Angelopoulos, S.~Bates, E.~J. Cand{\`e}s, M.~I. Jordan, and L.~Lei,
  ``Learn then test: Calibrating predictive algorithms to achieve risk
  control,'' \emph{arXiv preprint arXiv:2110.01052}, 2021.

\bibitem{papadopoulos2008inductive}
H.~Papadopoulos, ``Inductive conformal prediction: Theory and application to
  neural networks,'' in \emph{Tools in artificial intelligence}.\hskip 1em plus
  0.5em minus 0.4em\relax IntechOpen, 2008.

\bibitem{dean1990linear}
A.~Dean and J.~Verducci, ``Linear transformations that preserve majorization,
  schur concavity, and exchangeability,'' \emph{Linear algebra and its
  applications}, vol. 127, pp. 121--138, 1990.

\bibitem{kuchibhotla2020exchangeability}
A.~K. Kuchibhotla, ``Exchangeability, conformal prediction, and rank tests,''
  \emph{arXiv preprint arXiv:2005.06095}, 2020.

\bibitem{tibshirani2019conformal}
R.~J. Tibshirani, R.~Foygel~Barber, E.~Candes, and A.~Ramdas, ``Conformal
  prediction under covariate shift,'' \emph{Advances in Neural Information
  Processing Systems}, vol.~32, 2019.

\bibitem{citepython}
G.~Van~Rossum and F.~L. Drake, \emph{Python 3 Reference Manual}.\hskip 1em plus
  0.5em minus 0.4em\relax Scotts Valley, CA: CreateSpace, 2009.

\bibitem{scikit-learn}
F.~Pedregosa, G.~Varoquaux, A.~Gramfort, V.~Michel, B.~Thirion, O.~Grisel,
  M.~Blondel, P.~Prettenhofer, R.~Weiss, V.~Dubourg, J.~Vanderplas, A.~Passos,
  D.~Cournapeau, M.~Brucher, M.~Perrot, and E.~Duchesnay, ``Scikit-learn:
  Machine learning in {P}ython,'' \emph{Journal of Machine Learning Research},
  vol.~12, pp. 2825--2830, 2011.

\bibitem{NEURIPS2019_9015}
\BIBentryALTinterwordspacing
A.~Paszke, S.~Gross, F.~Massa, A.~Lerer, J.~Bradbury, G.~Chanan, T.~Killeen,
  Z.~Lin, N.~Gimelshein, L.~Antiga, A.~Desmaison, A.~Kopf, E.~Yang, Z.~DeVito,
  M.~Raison, A.~Tejani, S.~Chilamkurthy, B.~Steiner, L.~Fang, J.~Bai, and
  S.~Chintala, ``Pytorch: An imperative style, high-performance deep learning
  library,'' in \emph{Advances in Neural Information Processing Systems
  32}.\hskip 1em plus 0.5em minus 0.4em\relax Curran Associates, Inc., 2019,
  pp. 8024--8035. [Online]. Available:
  \url{http://papers.neurips.cc/paper/9015-pytorch-an-imperative-style-high-performance-deep-learning-library.pdf}
\BIBentrySTDinterwordspacing

\bibitem{Feng2022selective}
L.~Feng, M.~O. Ahmed, H.~Hajimirsadeghi, and A.~Abdi, ``Towards better
  selective classification,'' \emph{arXiv preprint arXiv:2206.09034}, 2022.

\bibitem{rasmussen1996delve}
C.~E. Rasmussen, R.~M. Neal, G.~Hinton, D.~van Camp, M.~Revow, Z.~Ghahramani,
  R.~Kustra, and R.~Tibshirani, ``Delve data for evaluating learning in valid
  experiments,'' \emph{URL http://www. cs. toronto. edu/~ delve}, 1996.

\bibitem{alcala2010keel}
J.~Alcal{\'a}, A.~Fern{\'a}ndez, J.~Luengo, J.~Derrac, S.~Garc{\'\i}a,
  L.~S{\'a}nchez, and F.~Herrera, ``Keel data-mining software tool: Data set
  repository, integration of algorithms and experimental analysis framework,''
  \emph{Journal of Multiple-Valued Logic and Soft Computing}, vol.~17, no. 2-3,
  pp. 255--287, 2010.

\bibitem{asuncion2007uci}
A.~Asuncion and D.~Newman, ``Uci machine learning repository,'' 2007.

\bibitem{breiman2001random}
L.~Breiman, ``Random forests,'' \emph{Machine Learning}, vol.~45, no.~1, pp.
  5--32, 2001.

\bibitem{geurts2006extremely}
P.~Geurts, D.~Ernst, and L.~Wehenkel, ``Extremely randomized trees,''
  \emph{Machine learning}, vol.~63, pp. 3--42, 2006.

\bibitem{tsoumakas2011mulan}
G.~Tsoumakas, E.~Spyromitros-Xioufis, J.~Vilcek, and I.~Vlahavas, ``Mulan: A
  java library for multi-label learning,'' \emph{The Journal of Machine
  Learning Research}, vol.~12, pp. 2411--2414, 2011.

\bibitem{vannitsem2018statistical}
S.~Vannitsem, D.~S. Wilks, and J.~Messner, \emph{Statistical postprocessing of
  ensemble forecasts}.\hskip 1em plus 0.5em minus 0.4em\relax Elsevier, 2018.

\bibitem{vannitsem2021statistical}
S.~Vannitsem, J.~B. Bremnes, J.~Demaeyer, G.~R. Evans, J.~Flowerdew, S.~Hemri,
  S.~Lerch, N.~Roberts, S.~Theis, A.~Atencia \emph{et~al.}, ``Statistical
  postprocessing for weather forecasts: Review, challenges, and avenues in a
  big data world,'' \emph{Bulletin of the American Meteorological Society},
  vol. 102, no.~3, pp. E681--E699, 2021.

\bibitem{gronquist2021deep}
P.~Gr{\"o}nquist, C.~Yao, T.~Ben-Nun, N.~Dryden, P.~Dueben, S.~Li, and
  T.~Hoefler, ``Deep learning for post-processing ensemble weather forecasts,''
  \emph{Philosophical Transactions of the Royal Society A}, vol. 379, no. 2194,
  p. 20200092, 2021.

\bibitem{palmer2019ecmwf}
T.~Palmer, ``The ecmwf ensemble prediction system: Looking back (more than) 25
  years and projecting forward 25 years,'' \emph{Quarterly Journal of the Royal
  Meteorological Society}, vol. 145, pp. 12--24, 2019.

\bibitem{cisl_rda_ds330}
\BIBentryALTinterwordspacing
{National Centers for Environmental Prediction, National Weather Service, NOAA,
  U.S. Department of Commerce}, {Japan Meteorological Agency, Japan}, {Met
  Office, Ministry of Defence, United Kingdom}, {China Meteorological
  Administration, China}, {Meteorological Service of Canada, Environment
  Canada}, {Korea Meteorological Administration, Republic of Korea},
  {Meteo-France, France}, {European Centre for Medium-Range Weather Forecasts},
  and {Bureau of Meteorology, Australia}, ``Thorpex interactive grand global
  ensemble (tigge) model tropical cyclone track data,'' Boulder CO, 2008.
  [Online]. Available: \url{https://doi.org/10.5065/D6GH9GSZ}
\BIBentrySTDinterwordspacing

\bibitem{hersbach2020era5}
H.~Hersbach, B.~Bell, P.~Berrisford, S.~Hirahara, A.~Hor{\'a}nyi,
  J.~Mu{\~n}oz-Sabater, J.~Nicolas, C.~Peubey, R.~Radu, D.~Schepers
  \emph{et~al.}, ``The era5 global reanalysis,'' \emph{Quarterly Journal of the
  Royal Meteorological Society}, vol. 146, no. 730, pp. 1999--2049, 2020.

\bibitem{li2021survey}
Z.~Li, F.~Liu, W.~Yang, S.~Peng, and J.~Zhou, ``A survey of convolutional
  neural networks: analysis, applications, and prospects,'' \emph{IEEE
  Transactions on Neural Networks and Learning Systems}, 2021.

\bibitem{ronneberger2015u}
O.~Ronneberger, P.~Fischer, and T.~Brox, ``U-net: Convolutional networks for
  biomedical image segmentation,'' in \emph{International Conference on Medical
  Image Computing and Computer-Assisted Intervention}.\hskip 1em plus 0.5em
  minus 0.4em\relax Springer, 2015, pp. 234--241.

\bibitem{kingma2014adam}
D.~P. Kingma and J.~Ba, ``Adam: A method for stochastic optimization,''
  \emph{arXiv preprint arXiv:1412.6980}, 2014.

\end{thebibliography}
\end{document}